\documentclass{llncs}
\usepackage{llncsdoc}

\newcommand{\ignore}[1]{}

\ignore{

\newtheorem{theorem}{Theorem}

\newtheorem{lemma}[theorem]{Lemma}

}

\begin{document}

\title{Non-Adaptive Learning a Hidden Hipergraph}
\author{Hasan Abasi${}^1$ \and Nader H. Bshouty${}^1$ \and Hanna Mazzawi${}^2$}
\institute{Department of Computer Science\\ Technion, Haifa, 32000 \and
Google, London.\\
76 Buckingham Palace Rd.}


%
\maketitle

\begin{abstract}
We give a new deterministic algorithm that non-adaptively learns a hidden hypergraph from edge-detecting
queries. All previous non-adaptive algorithms either run in exponential time
or have non-optimal query complexity. We give the first polynomial time
non-adaptive learning algorithm for learning hypergraph
that asks almost optimal number of queries.
\end{abstract}

\section{Introduction}
Let ${\cal G}_{s,r}$ be a set of all labeled hypergraphs of
rank at most $r$ on the set $V=\{1,2,\ldots,n\}$
with at most $s$ edges.
Given a hidden hypergraph $G\in {\cal G}_{s,r}$, we need to identify it by asking
{\it edge-detecting queries}. An edge-detecting query $Q_G(S)$, for $S\subseteq V$
is: does $S$ contain at least one edge of $G$? Our objective is to non-adaptively learn
the hypergraph $G$ by asking as few queries as possible.

This problem has many applications in chemical reactions, molecular biology and genome sequencing.
In chemical reactions, we are given a set of chemicals,
some of which react and some which do not.
When multiple chemicals are combined in one
test tube, a reaction is detectable if and
only if at least one set of the chemicals in the tube reacts.
The goal is to identify which sets react
using as few experiments as possible. The time needed
to compute which experiments to do is a
secondary consideration, though it is polynomial for
the algorithms we present.
See \cite{GK98,T99,BAK01,DVMT02,ABKRS04,MRY04,MP04,AA05,BGK05,AC06,DH06,RS07,CH08,AC08,CCFS11,CLY13,CFS14,ABM14}
for more details and many other applications in molecular biology.

In all of the above applications
the rank of the hypergraph is much smaller than the number of edges
and both are much smaller than the number of vertices~$n$.
Therefore, throughout the paper, we will assume that $r\le s$ and $s=o(n)$.

The above hypergraph learning problem is
equivalent to the
problem of non-adaptively learning a monotone DNF with at most $s$ monomials (monotone term),
where each monomial contains at most $r$ variables
($s$-term $r$-MDNF) from membership queries~\cite{A87,AC08}.
In this paper we will use the later terminology rather
than the hypergraph one.

The adaptive learnability of $s$-term $r$-MDNF was studied in~\cite{AC06,DH06,AC08,ABM14}.
In \cite{ABM14}, Abasi et. al. gave a polynomial time adaptive learning algorithm for $s$-term $r$-MDNF
with almost optimal query complexity. The non-adaptive learnability of $s$-term $r$-MDNF
was studied in~\cite{T99,MP04,MRY04,GHTW06,DH06,CLY13,BG14}.

Torney~,\cite{T99}, first introduced the problem and gave some applications in molecular biology.
The first explicit non-adaptive learning algorithm for $s$-term $r$-MDNF
was given by Gao et. al., \cite{GHTW06}.
They show that this class can be learned using $(n,(s,r))$-cover-free family ($(n,(s,r))$-CFF).
This family is a set $A\subseteq \{0,1\}^n$
of assignments such that for every distinct $i_1,\ldots,i_s,j_1,\ldots,j_r\in \{1,\ldots,n\}$
there is $a\in A$ such that $a_{i_1}=\cdots=a_{i_s}=0$ and $a_{j_1}=\cdots=a_{j_r}=1$.
Given such a set, the algorithm simply takes all the monomials $M$ of size at most $r$
that satisfy $(\forall a\in A)(M(a)=1\Rightarrow f(a)=1)$. It
is easy to see that the disjunction
of all such monomials is equivalent to the target function.
Assuming a set of $(n,(s,r))$-CFF of size $N$ can be constructed in time $T$,
this algorithm learns $s$-term $r$-MDNF with $N$ queries in time $O({n\choose r}+T)$.

In~\cite{DH06,B13}, it is shown that any set $A\subset\{0,1\}^n$
that non-adaptively learns $s$-term $r$-MDNF is an $(n,(s-1,r))$-CFF.
Therefore, the minimum size
of an $(n,(s-1,r))$-CFF is also a lower bound for
the number of queries (and therefore also for the time) for non-adaptively learning $s$-term $r$-MDNF.
It is known, \cite{SWZ00}, that any $(n,(s,r))$-CFF must have size at least $\Omega (N(s,r)\log n)$
where
\begin{eqnarray}\label{Nsr}
N(s,r)=\frac{s+r}{\log{s+r\choose r}}{s+r\choose r}.
\end{eqnarray}
Therefore, any non-adaptive algorithm for learning $s$-term $r$-DNF must ask
at least $N(s-1,r)\log n=\Omega(N(s,r)\log n)$ queries and runs in at least $\Omega(N(s,r)n$ $\log n)$ time.

Gao et. al. constructed an $(n,(s,r))$-CFF of size $S=(2s\log n/\log(s\log n))^{r+1}$
in time $\tilde O(S)$. It follows from \cite{SWZ}
that an $(n,(s,r))$-CFF of size
$O\left((sr)^{\log^* n}\log n\right)$ can be constructed in polynomial time.
A polynomial time almost optimal constructions of size $N(s,r)^{1+o(1)}\log n$
for $(n,(s,r))$-CFF
were given in \cite{B12,B14b,BG14,FLS14} which give
better query complexities, but still, the above algorithms have exponential time complexity
$O({n\choose r})$, when $r$ is not constant. The latter result implies
that there is a non-adaptive algorithm that asks $Q:=N(s,r)^{1+o(1)}\log n$ queries and runs in
exponential time
$O({n\choose r})$.
Though, when $r=O(1)$ is constant, the above algorithms
run in polynomial time and are optimal. Therefore, we will assume $r=\omega(1)$.

Chin et. al. claim in~\cite{CLY13} that they have a
polynomial time algorithm that constructs an
$(n,(s,r))$-CFF of optimal size.
Their analysis is misleading.\footnote{Some parts of the construction
can indeed be performed in polynomial time,
but not the whole construction} The size is indeed optimal but the time
complexity of the construction is $O({n\choose r+s})$.
But even if a $(n,(s,r))$-CFF can be constructed in polynomial
time, the above learning algorithm still takes $O({n\choose r})$ time.

Macula et. al., \cite{MP04,MRY04}, gave several randomized
non-adaptive algorithms. We first use their ideas combined with
the constructions of $(n,(r,s))$-CFF in~\cite{B12,B14b,BG14,FLS14} to
give a new non-adaptive algorithm that asks $N(s,r)^{1+o(1)}\log^2 n$
queries and runs in $poly(n,N(s,r))$ time. This algorithm is almost
optimal in~$s$ and $r$ but quadratic in $\log n$. We then use a new technique
that changes any non-adaptive learning algorithm that asks $Q(r,s,n)$ queries
to a non-adaptive learning algorithm that asks $(rs)^2\cdot Q(r,s,(rs)^2)\log n$
queries. This give a non-adaptive learning algorithm that asks $N(s,r)^{1+o(1)}\log n$
queries and runs in $n\log n\cdot poly(N(s,r))$ time.

The following table summarizes the results ($r=\omega(1)$)

\begin{center}
\begin{tabular}{c||c|c|}
References &Query Complexity& Time Complexity\\
\hline\hline

\cite{GHTW06}& $N(s,r)\cdot (r\log n/\log(s\log n))^{r+1}$ & ${n\choose r}$\\
\hline
\cite{CLY13} & $N(s,r)\log n$ & ${n\choose r+s}$\\
\hline
\cite{B12,B14b,BG14,FLS14} & $N(s,r)^{1+o(1)}\log n$ & ${n\choose r}$\\
\hline
Ours+\cite{MP04,MRY04}+\cite{BG14} & $N(s,r)^{1+o(1)}\log^2n$ & $poly(n,N(s,r))$\\
\hline
Ours & $N(s,r)^{1+o(1)}\log n $ & $(n\log n) \cdot poly(N(s,r))$\\
\hline
Ours,\ $r=o(s)$& $N(s,r)^{1+o(1)}\log n $ & $(n\log n) \cdot N(s,r)^{1+o(1)}$\\\hline\hline
Lower Bound~\cite{DH06} & $N(s,r)\log n $ & $(n \log n)\cdot N(s,r)$\\ \hline
\end{tabular}
\end{center}

This paper is organized as follows. Section 2 gives some definitions
and preliminary results that will be used throughout the paper.
Section 3 gives the first algorithm
that asks $N(s,r)^{1+o(1)}\log^2n$ membership queries and
runs in time $poly(n,N(s,r))$. Section 4 gives the reduction and
shows how to use it
to give the second algorithm that
asks $N(s,r)^{1+o(1)}\log n $ membership queries and runs in time $(n\log n) \cdot N(s,r)^{1+o(1)}$.
All the algorithms in this paper are deterministic.
In the full paper we will also consider randomized algorithms that
slightly improve (in the $o(1)$ of the exponent) the query and time complexity.

\section{Definitions}

\subsection{Monotone Boolean Functions}
For a vector $w$, we denote by $w_i$ the $i$th entry of $w$.
Let $\{e^{(i)}\ |\ i=1,\ldots,n\}\subset \{0,1\}^n$ be the standard basis.
That is, $e^{(i)}_j=1$ if $i=j$ and $e^{(i)}_j=0$ otherwise.
For a positive integer~$j$,
we denote by $[j]$ the set $\{1,2,\ldots,j\}$. For two assignments
$a,b\in \{0,1\}^n$ we denote by $(a\wedge b)\in\{0,1\}^n$ the bitwise AND assignment. That is,
$(a\wedge b)_i=a_i\wedge b_i$.

Let $f(x_1,x_2,\ldots,x_n)$ be a boolean
function from $\{0,1\}^n$ to $\{0,1\}$.
For $1\le i_1<i_2<\cdots<i_k\le n$
and $\sigma_1,\ldots,\sigma_k\in\{0,1\}\cup\{x_1,\ldots,x_n\}$ we denote by
$$f|_{x_{i_1}\gets \sigma_1, x_{i_2}\gets
\sigma_2,\cdots, x_{i_k}\gets\sigma_k}$$ the function $f(y_1,\ldots,y_n)$ where
$y_{i_j}=\sigma_j$ for all $j\in[k]$ and $y_i=x_i$ for all
$i\in [n]\backslash \{i_1,\ldots,i_k\}$. We say that the variable $x_i$ is {\it relevant}
in $f$ if $f|_{x_i\gets 0}\not\equiv f|_{x_i\gets 1}$. A variable $x_i$ is {\it irrelevant}
in $f$ if it is not relevant in $f$.
We say that the class is {\it closed under variable projections} if for every $f\in C$
and every two variables $x_i$ and $x_j$, $i,j\le n$, we have $f|_{x_i\gets x_j}\in C$.

For two assignments $a,b\in \{0,1\}^n$, we write $a\le b$ if
for every $i\in [n]$, $a_i\le b_i$. A Boolean function $f:\{0,1\}^n\to\{0,1\}$ is {\it monotone} if
for every two assignments $a,b\in\{0,1\}^n$, if $a\le b$ then $f(a)\le f(b)$.
Recall that every monotone boolean function $f$ has a unique representation
as a reduced monotone DNF,~\cite{A87}. That is, $f = M_1\vee M_2 \vee \cdots \vee M_s$ where
each {\it monomial} $M_i$ is an ANDs of input variables, and for every
monomial $M_i$ there is a unique assignment $a^{(i)}\in\{0,1\}^n$ such that $f(a^{(i)})=1$
and for every $j\in [n]$ where $a^{(i)}_j=1$ we have $f(a^{(i)}|_{x_j\gets 0})=0$. We call
such assignment a {\it minterm} of the function $f$. Notice that
every monotone DNF can be uniquely determined by its minterms~\cite{A87}.
That is, $a\in\{0,1\}^n$ is a minterm of $f$ iff $M:=\wedge_{i\in\{j:a_j=1\}}x_i$ is a
monomial in $f$.

An {\it $s$-term $r$-MDNF} is a monotone DNF with at most $s$ monomials,
where each monomial contains at most $r$ variables. It is easy to see
that the class $s$-term $r$-MDNF is closed under variable projections.

\subsection{Learning from Membership Queries}
Consider a {\it teacher} that has a {\it target function}
$f:\{0,1\}^n\to \{0,1\}$ that is $s$-term $r$-MDNF. The teacher
can answer {\it membership queries}.
That is, when receiving $a\in\{0,1\}^n$ it returns $f(a)$.
A {\it learning algorithm} is an algorithm
that can ask the teacher membership queries.
The goal of the learning algorithm is to
{\it exactly learn} (exactly find) $f$ with minimum number of
membership queries and optimal time complexity.

Let $c$ and $H\supset C$ be classes of boolean formulas.
We say that $C$ is {\it learnable from} $H$
in time $T(n)$ with $Q(n)$ membership queries
if there is a learning algorithm that, for
a target function $f\in C$, runs in time $T(n)$,
asks at most $Q(n)$ membership queries and outputs a function $h$ in $H$
that is equivalent to $C$. When $H=C$ then we say
that $C$ is {\it properly learnable}
in time $T(n)$ with $Q(n)$ membership queries.

In adaptive algorithms the queries can depend on the answers to the previous
queries where in non-adaptive algorithms the queries are independent
of the answers to the previous queries and therefore all the queries
can be asked in parallel, that is, in one step.

\subsection{Learning a Hypergraph}
Let ${\cal G}_{s,r}$ be a set of all labeled
hypergraphs on the set of vertices $V=\{1,2,\ldots,n\}$
with $s$ edges of rank (size) at most $r$.
Given a hidden hypergraph $G\in {\cal G}_{s,r}$, we need to identify it by asking
{\it edge-detecting queries}. An edge-detecting query $Q_G(S)$, for $S\subseteq V$
is: does $S$ contain at least one edge of $G$? Our objective is to learn (identify)
the hypergraph $G$ by asking as few queries as possible.

This problem is equivalent to learning $s$-term $r$-MDNF $f$ from
membership queries. Each edge $e$ in the hypergraph corresponds to
the monotone term $\wedge_{i\in e}x_i$ in $f$ and the edge-detecting query $Q_G(S)$
corresponds to asking membership queries
of the assignment $a^{(S)}$ where $a^{(S)}_i=1$ if and only if $i\in S$.
Therefore, the class ${\cal G}_{s,r}$ can be regarded as the set of
$s$-term $r$-MDNF. The class of $s$-term $r$-MDNF
is denoted by ${\cal G}^*_{s,r}$.
Now it obvious that any learning algorithm for ${\cal G}^*_{s,r}$ is also
a learning algorithm for ${\cal G}_{s,r}$.

The following example shows that we cannot allow two edges $e_1\subset e_2$.
Let $G_1$ be a graph where $V_1=\{1,2\}$ and $E_1=\{\{1\},\{1,2\}\}$.
This graph corresponds
to the function $f=x_1\vee x_1x_2$ that is equivalent to $x_1$
which corresponds to the graph $G_2$ where $V_2=\{1,2\}$ and $E_2=\{\{1\}\}$.
Also, no edge-detecting query can distinguish between $G_1$ and $G_2$.

We say that $A\subseteq \{0,1\}$ is an {\it identity testing set}
for ${\cal G}^*_{s,r}$
if for every two distinct $s$-term $r$-MDNF $f_1$ and $f_2$
there is $a\in A$ such that $f_1(a)\not= f_2(a)$. Obviously, every
identity testing set for ${\cal G}^*_{s,r}$ can be used as queries to non-adaptively
learns ${\cal G}^*_{s,r}$.

\subsection{Cover Free Families}\label{CVF}

An $(n,(s,r))$-cover free
family ($(n,(s,r))$-CFF), \cite{KS64}, is a set $A\subseteq \{0,1\}^n$ such that for every $1\le
i_1< i_2<\cdots < i_d\le n$ where $d = s +r$ and every $J
\subseteq [d]$ of size $|J|=s$ there is $a\in A$ such that
$a_{i_k} = 0$ for all $k \in J$ and $a_{i_j} = 1$ for all $j
\in [d]\backslash J$. Denote by $N(n,(s,r))$ the minimum size of such set.
Again here we assume that $r\le s$ and $s=o(n)$.
The lower bound in
\cite{SWZ00,MW04} is
\begin{eqnarray}\label{LBCFF}
N(n,(s,r))\ge \Omega\left(N(s,r)\cdot\log n\right)
\end{eqnarray}
where $N(s,r)$ is as defined in (\ref{Nsr}).
It is known that a set of random
\begin{eqnarray}\label{rand2}
m&=&O\left(r^{1.5}\left(\log \left(\frac{s}{r}+1\right)\right)
\left (N(s,r)\cdot\log n+\frac{N(s,r)}{s+r}\log\frac{1}{\delta}\right)\right)\nonumber\\
&=& N(s,r)^{1+o(1)}(\log n+\log(1/\delta))
\end{eqnarray}
assignments $a^{(i)}\in \{0,1\}^n$, where each $a^{(i)}_j$ is $1$ with
probability $r/(s+r)$, is an $(n,(s,r))$-CFF with probability at least $1-\delta$.

It follows from \cite{B12,B14b,BG14,FLS14} that there is
a polynomial time (in the size of the CFF)
deterministic construction of $(n,(s,r))$-CFF
of size
\begin{eqnarray}\label{recent}
N(s,r)^{1+o(1)}\log n\end{eqnarray} where the $o(1)$ is with respect to $r$.
When $r=o(s)$ the construction runs in linear time~\cite{B14b,BG14}.

\subsection{Perfect Hash Function}

Let $H$ be a family of functions $h:[n]\to [q]$.
For $d\le q$ we say that $H$ is an $(n,q,d)$-{\it perfect hash family} ($(n,q,d)$-PHF)
\cite{AMS06} if for every
subset $S\subseteq [n]$ of size $|S|=d$ there is a {\it hash
function} $h\in H$ such that $h|_S$ is injective (one-to-one) on~$S$, i.e.,
$|h(S)|=d$.

In \cite{B14b} Bshouty shows
\begin{lemma}\label{ThH1c}
Let $q\ge 2d^2$. There is a $(n,q,d)$-PHF of size
$$O\left(\frac{d^2\log n}{\log(q/d^2)}\right)$$
that can be constructed in time $O(qd^2n\log n/\log(q/d^2))$.
\end{lemma}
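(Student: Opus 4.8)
The plan is to establish the existence of a small perfect hash family by a union-bound argument and then convert that argument into a deterministic construction via the method of conditional expectations, fixing the hash values one coordinate at a time.

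First I would estimate the failure probability of a single random function. Fix a $d$-set $S\subseteq[n]$ and let $h:[n]\to[q]$ be uniformly random. The event that $h$ is not injective on $S$ is contained in the union, over the ${d\choose 2}$ pairs $\{a,b\}\subseteq S$, of the collision events $h(a)=h(b)$, each of probability $1/q$; hence $\Pr[h|_S \text{ not injective}]\le {d\choose 2}/q\le d^2/(2q)\le 1/4$, where the last inequality uses the hypothesis $q\ge 2d^2$. Taking $t$ independent functions, a fixed $S$ is separated by none of them with probability at most $(d^2/(2q))^t$, and a union bound over the at most ${n\choose d}\le n^d$ choices of $S$ shows that a random family of size $t$ is an $(n,q,d)$-PHF once $t>\frac{d\log n}{\log(2q/d^2)}$. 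This already captures the right dependence on $q$; the factor $d^2$ (rather than $d$) in the statement is the price paid for making the next, deterministic, step efficient, so I would carry a family size $t=O(d^2\log n/\log(q/d^2))$, which leaves ample slack in the union bound.

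Next I would derandomize. The naive potential, namely the number of $d$-sets not yet separated, cannot be maintained directly, since there are ${n\choose d}$ of them. Instead I would use a \emph{pessimistic estimator} obtained from the pairwise relaxation above: for a partially specified family, replace the indicator that $S$ is unseparated by the product, over the functions $h_i$, of the pairwise collision sums $\sum_{\{a,b\}\subseteq S}\mathbf 1[h_i(a)=h_i(b)]$, which upper-bounds it. Because this estimator is assembled from pair collisions, its conditional expectation can be updated through pair-collision statistics rather than by enumerating $d$-sets. I would therefore assign the values $h_k(1),h_k(2),\ldots,h_k(n)\in[q]$ one at a time, always choosing the value that minimizes the conditional expectation, so that the estimator never exceeds its initial expectation. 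Since $t$ was chosen to make that expectation below $1$, the resulting deterministic family leaves no $d$-set unseparated, i.e.\ it is an $(n,q,d)$-PHF.

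Finally I would account for the running time. Building one function costs $O(q)$ work per coordinate (trying all $q$ candidate values and selecting the best) over $n$ coordinates, i.e.\ $O(qn)$ per function, and there are $t=O(d^2\log n/\log(q/d^2))$ functions, for a total of $O(qd^2n\log n/\log(q/d^2))$, as claimed. The main obstacle is the design and analysis of the pessimistic estimator: it must simultaneously upper-bound the true count of unseparated $d$-sets, admit an $O(q)$-per-coordinate conditional-expectation update that never touches individual $d$-sets, and remain tight enough that $t=O(d^2\log n/\log(q/d^2))$ functions drive it below $1$. Verifying these three properties together — in particular that the pairwise relaxation costs only a single factor of $d$ over the probabilistic optimum — is where the real work lies; the union bound and the time accounting are routine by comparison.
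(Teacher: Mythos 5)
The first thing to note is that the paper contains no proof of this lemma to compare against: it is imported verbatim from \cite{B14b} (``In \cite{B14b} Bshouty shows\ldots''), so your proposal must stand on its own merits --- and it has a genuine gap at exactly the step you yourself flag as ``where the real work lies.'' The probabilistic existence part is fine (indeed the union bound $(d^2/(2q))^t\cdot n^d<1$ gives the stronger size $O(d\log n/\log(q/d^2))$, so the factor $d^2$ in the lemma leaves slack, as you say). But the derandomization rests entirely on the assertion that the pessimistic estimator $\Phi=\sum_{|S|=d}\prod_{i=1}^{t}\sum_{\{a,b\}\subseteq S}\mathbf{1}[h_i(a)=h_i(b)]$ ``can be updated through pair-collision statistics rather than by enumerating $d$-sets.'' That assertion is unsubstantiated and, as stated, does not hold: expanding the product rewrites the estimator as $\sum_{P_1,\ldots,P_t}\prod_{i}\mathbf{1}[h_i\mbox{ collides on }P_i]\cdot{n-|P_1\cup\cdots\cup P_t|\choose d-|P_1\cup\cdots\cup P_t|}$, a sum over $t$-tuples of pairs whose weights depend on the size of the union $P_1\cup\cdots\cup P_t$. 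Per-function pair-collision counts do not determine this quantity; one needs $t$-fold correlation statistics across all the functions, and no $O(q)$-per-coordinate (or even polynomial-time) maintenance scheme follows from what you wrote. Building the functions one at a time does not escape the problem either: the conditional expectation after fixing $h_1,\ldots,h_{i-1}$ still involves $\sum_{|S|=d}\prod_{j<i}X_j(S)$, which again requires tracking correlations among the already-fixed functions over all $d$-sets. Since a non-constructive union bound was already standard, the efficient computability of the estimator \emph{is} the content of the lemma, and your outline does not supply it; consequently the time bound $O(qd^2n\log n/\log(q/d^2))$, which you derive from the assumed $O(q)$-per-coordinate update, is also unsupported.

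The known polynomial- and linear-time constructions (in \cite{AMS06,B12,B14b}) avoid this obstacle by a different mechanism, namely composition of hash families rather than a single conditional-expectations pass over $[n]$: if $H_1$ is an $(n,m,d)$-PHF and $H_2$ is an $(m,q,d)$-PHF, then $\{h_2\circ h_1 : h_1\in H_1,\ h_2\in H_2\}$ is an $(n,q,d)$-PHF of size $|H_1|\cdot|H_2|$. One first reduces the universe from $n$ to $\mathrm{poly}(d)$ (e.g.\ via code-based outer families), after which exhaustive search or conditional expectations become affordable because the number of $d$-subsets of the small universe is manageable; the loss incurred in the composition is precisely the source of the extra factor of $d$ in the size bound ($d^2\log n$ versus the probabilistic $d\log n$), whereas in your account that factor is attributed, without justification, to making the estimator efficient. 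To repair your proof you would need either to exhibit an actually maintainable pessimistic estimator --- the natural pairwise one fails as explained above --- or to replace the single-pass derandomization with such a two-level composition argument.
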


We now give the following folklore results that will be used for randomized learning algorithms
\begin{lemma} \label{RPH} Let $q>d(d-1)/2$ be any integer.
Fix any set $S\subset [n]$ of $d$ integers. Consider
$$N:=\frac{\log(1/\delta)}{\log\left(\frac{1}{1-g(q,d)}\right)}\le \frac{\log(1/\delta)}{\log \frac{2q}{d(d-1)}}$$
uniform random hash functions $h_i:[n]\to [q]$, $i=1,\ldots,N$ where
$$g(q,d):=\left(1-\frac{1}{q}\right)\left(1-\frac{2}{q}\right)\cdots \left(1-\frac{d-1}{q}\right)$$
With probability at least $1-\delta$
one of the hash functions is one-to-one on $S$.
\end{lemma}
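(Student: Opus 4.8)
The plan is to reduce the statement to the elementary ``birthday'' computation for a single hash function and then exploit independence across the $N$ functions. First I would compute, for one uniform random $h:[n]\to[q]$, the probability that $h$ restricted to the fixed $d$-element set $S$ is one-to-one. Since $h$ is drawn uniformly from all functions $[n]\to[q]$, the $d$ values $h(x)$, $x\in S$, are independent and uniform on $[q]$; counting the assignments in which they are pairwise distinct shows this probability equals
$$\frac{q(q-1)\cdots(q-d+1)}{q^d}=\prod_{i=1}^{d-1}\left(1-\frac{i}{q}\right)=g(q,d),$$
so $g(q,d)$ as defined in the statement is exactly the success probability for a single hash function.

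Next, because $h_1,\dots,h_N$ are chosen independently, the probability that \emph{none} of them is one-to-one on $S$ is $(1-g(q,d))^N$. I would then require this failure probability to be at most $\delta$, i.e.\ $(1-g(q,d))^N\le\delta$. Taking logarithms, and noting that $0<g(q,d)<1$ so that $\log(1/(1-g(q,d)))>0$, this holds precisely when
$$N\ge \frac{\log(1/\delta)}{\log\left(\frac{1}{1-g(q,d)}\right)},$$
which is the value of $N$ fixed in the lemma; hence for that $N$ some $h_i$ is one-to-one on $S$ with probability at least $1-\delta$.

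It remains to justify the displayed upper bound on $N$, which is the only step needing an estimate. As $\log(1/\delta)>0$, the claimed inequality is equivalent to $\log(1/(1-g(q,d)))\ge\log(2q/(d(d-1)))$, i.e.\ to $1-g(q,d)\le d(d-1)/(2q)$. I would obtain this either from the union bound over the ${d\choose 2}$ pairs in $S$, each colliding with probability $1/q$, or equivalently from the Weierstrass product inequality $\prod_{i=1}^{d-1}(1-i/q)\ge 1-\sum_{i=1}^{d-1} i/q = 1-d(d-1)/(2q)$. The hypothesis $q>d(d-1)/2$ enters exactly here: it guarantees $2q/(d(d-1))>1$, so the logarithm in the denominator on the right is positive and the bound is meaningful. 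The only real subtlety is keeping the inequality direction correct when dividing through by the negative quantity $\log(1-g(q,d))$; everything else is a direct probability computation.
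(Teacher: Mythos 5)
Your proposal is correct. Note that the paper itself gives no proof of this lemma at all---it is explicitly stated as a folklore result---so there is no argument of the authors' to diverge from; what you have written is precisely the standard argument the authors implicitly invoke: for a single uniform hash function the injectivity probability on $S$ is $g(q,d)=\prod_{i=1}^{d-1}\left(1-\frac{i}{q}\right)$ by the birthday computation, independence of the $h_i$ makes the failure probability $(1-g(q,d))^N\le\delta$ for the stated $N$, and the displayed upper bound on $N$ reduces to $1-g(q,d)\le \frac{d(d-1)}{2q}$, which follows from the union bound over the ${d\choose 2}$ pairs in $S$ (equivalently, the Weierstrass product inequality), with the hypothesis $q>d(d-1)/2$ ensuring that $\log\frac{2q}{d(d-1)}>0$ so the bound is meaningful. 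The only cosmetic omissions are that $N$ should be taken as the ceiling of the stated quantity to be an integer number of hash functions, and that the $h_i$ must be assumed mutually independent, as is clearly intended in the lemma's statement.
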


\section{The First Algorithm}
In this section we give the first algorithm that asks $N(s,r)^{1+o(1)}\log^2 n$ queries
and runs in time $poly(n,N(s,r))$

The first algorithm is based on the ideas in~\cite{MP04,MRY04} that were used
to give a Monte Carlo randomized algorithm.

\begin{lemma}~\label{L1} Let $A$ be an $(n,(1,r))$-CFF and $B$ be an $(n,(s-1,r))$-CFF.
There is a non-adaptive proper learning algorithm for $s$-term $r$-MDNF that asks all the
queries in $A\wedge B:=\{a\wedge b\ |\ a\in A, b\in B\}$ and finds
the target function in time $|A\wedge B|\cdot n$.
\end{lemma}
\begin{proof} Let $f$ be the target function.
For every $b\in B$, let $A_b=A\wedge b:=\{a\wedge b\ |\ a\in A\}$.
Let $I_b$ be the set of all $i\in [n]$ such that $(a\wedge b)_i\ge f(a\wedge b)$
for all $a\in A$. Let $T_b:=\wedge_{i\in I_b}x_i$.
We will show that
\begin{enumerate}
\item If $T$ is a term in $f$ then there is $b\in B$ such that $T_b\equiv T$.
\item Either $T_b=\wedge_{i\in[n]}x_i$ or $T_b$ is a subterm of one of terms of $f$.
\end{enumerate}
To prove 1, let $T$ be a term in $f$ and let $b\in B$ be an assignment
that satisfies $T$ and does not satisfy the other terms. Such assignment exists
because $B$ is $(n,(s-1,r))$-CFF. Notice that $f(x\wedge b)=T(x)=T(x\wedge b)$. If $x_i$ is in $T$
and $f(a\wedge b)=1$ then $T(a\wedge b)=T(a)=f(a\wedge b)=1$ and $(a\wedge b)_i=1$. Therefore $i\in I_b$
and $x_i$ in $T_b$.
If $x_i$ not in $T$ then since $A$ is $(n,(1,r))$-CFF
there is $a'\in A$ such that $T(a')=1$ and $a'_i=0$. Then $(a'\wedge b)_i=0$
where $f(a'\wedge b)=1$. Therefore $i$ is not in $I_b$ and
$x_i$ is not in $T_b$. Thus, $T_b\equiv T$.

We now prove 2. We have shown in 1 that if $b$ satisfies one term $T$
then $T_b\equiv T$. If $b$ does not satisfy any one of the terms in $f$
then $f(a\wedge b)=0$ for all $a\in A$ and then $T_b=\wedge_{i\in[n]}x_i$.
Now suppose $b$ satisfies at least two terms $T_1$ and $T_2$.
Consider any variable $x_i$. If $x_i$ not in $T_1$ then as before
$x_i$ will not be in $T_b$. This shows that $T_b$ is a subterm of $T_1$.\qed
\end{proof}

This gives the following algorithm
\begin{figure}[h!]
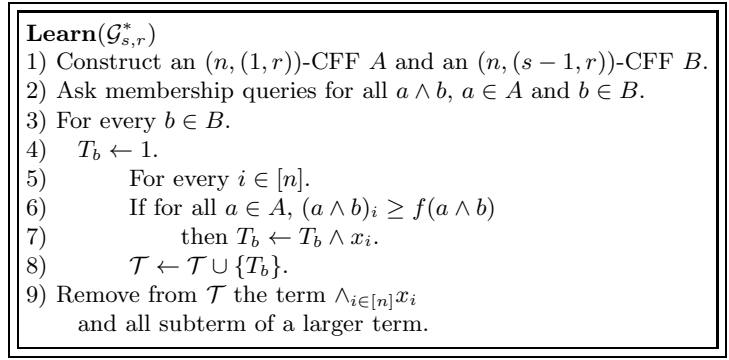

  \begin{center}
  \fbox{\fbox{\begin{minipage}{28em}
  \begin{tabbing}
  xxxx\=xxxx\=xxxx\=xxxx\= \kill
  {\bf Learn$({\cal G}^*_{s,r})$}\\
  1) Construct an $(n,(1,r))$-CFF $A$ and an $(n,(s-1,r))$-CFF $B$.\\
  2) Ask membership queries for all $a\wedge b$, $a\in A$ and $b\in B$.\\
  3) For every $b\in B$.\\
  4) \>$T_b\gets 1$.\\
  5) \>\> For every $i\in [n]$.\\
  6) \>\> If for all $a\in A$, $(a\wedge b)_i\ge f(a\wedge b)$\\
  7) \>\> \> then $T_b\gets T_b\wedge x_i$.\\
  8) \>\> ${\cal T}\gets {\cal T}\cup \{T_b\}$.\\
  9) Remove from ${\cal T}$ the term $\wedge_{i\in[n]}x_i$\\
  \> and all subterm of a larger term.
  \end{tabbing}
  \end{minipage}}}
  \end{center}
	\caption{An algorithm for learning ${\cal G}^*_{s,r}$.}
	\label{Alg}
	\end{figure}

We now have
\begin{theorem} There is a non-adaptive proper learning algorithm for $s$term $r$-MDNF that
asks $$N(s,r)^{1+o(1)}\log^2n$$ queries and runs in time $poly(n,N(s,r))$.
\end{theorem}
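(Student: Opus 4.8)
The plan is to instantiate Lemma~\ref{L1} with explicitly constructed cover-free families and then check that the resulting query bound collapses to the claimed quantity. First I would take $A$ to be an $(n,(1,r))$-CFF and $B$ to be an $(n,(s-1,r))$-CFF, both produced by the deterministic constructions of~(\ref{recent}), so that $|A|=N(1,r)^{1+o(1)}\log n$ and $|B|=N(s-1,r)^{1+o(1)}\log n$, and both are built in time polynomial in their sizes. By Lemma~\ref{L1} the algorithm that asks all queries in $A\wedge B$ properly learns the target, and the number of queries is
$$|A\wedge B|=|A|\cdot|B|=N(1,r)^{1+o(1)}\,N(s-1,r)^{1+o(1)}\,\log^2 n.$$

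The heart of the argument is to show $N(1,r)\cdot N(s-1,r)=N(s,r)^{1+o(1)}$. Using $\binom{r+1}{r}=r+1$, the identity $\binom{s+r}{r}/\binom{s-1+r}{r}=(s+r)/s$, and $\log\binom{s+r}{r}\sim\log\binom{s-1+r}{r}$, a short computation from the definition~(\ref{Nsr}) gives
$$\frac{N(1,r)\cdot N(s-1,r)}{N(s,r)}=\Theta\!\left(\frac{(r+1)^2}{\log(r+1)}\cdot\frac{s}{s+r}\right)=r^{O(1)},$$
so the product exceeds $N(s,r)$ only by a factor polynomial in $r$.

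The main obstacle is then to absorb this polynomial overhead into $N(s,r)^{o(1)}$, and this is exactly where the standing assumptions $r\le s$ and $r=\omega(1)$ enter. Since $N(\cdot,r)$ is increasing in its first argument, $r\le s$ gives $N(s,r)\ge N(r,r)=\Theta\!\big(2r\binom{2r}{r}/\log\binom{2r}{r}\big)=2^{\Omega(r)}$, so $\log N(s,r)=\Omega(r)$. Hence for any fixed constant in the exponent, $\log_{N(s,r)}\!\big(r^{O(1)}\big)=O(\log r)/\log N(s,r)=O(\log r/r)=o(1)$, i.e. $r^{O(1)}=N(s,r)^{o(1)}$; together with the previous display this yields $N(1,r)N(s-1,r)=N(s,r)^{1+o(1)}$ and hence a query count of $N(s,r)^{1+o(1)}\log^2 n$.

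For the running time I would simply collect the two contributions: constructing $A$ and $B$ costs time polynomial in their sizes, namely $\mathrm{poly}(N(s,r),\log n)$, and by Lemma~\ref{L1} the learning phase runs in $|A\wedge B|\cdot n=N(s,r)^{1+o(1)}\,n\log^2 n$ time. Both are $\mathrm{poly}(n,N(s,r))$, which gives the stated time bound. Everything beyond the exponential-in-$r$ lower bound on $N(s,r)$ that makes polynomial-in-$r$ factors negligible is routine bookkeeping of the CFF sizes.
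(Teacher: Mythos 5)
Your proposal is correct and takes essentially the same route as the paper: instantiate Lemma~\ref{L1} with deterministically constructed $(n,(1,r))$- and $(n,(s-1,r))$-CFFs and bound the queries by $|A|\cdot|B|$. The only difference is cosmetic — the paper uses a specific $(n,(1,r))$-CFF of size $r^2\log n$ and leaves the absorption of the $r^{O(1)}$ overhead into $N(s,r)^{o(1)}$ implicit, whereas you spell it out via $\log N(s,r)=\Omega(r)$ (using $r\le s$ and $r=\omega(1)$), which is a worthwhile clarification but not a different argument.
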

\begin{proof}
Constructing a $(n,(1,r))$-CFF of size $|A|=r^2\log n$ and a $(n,(s-1,r))$-CFF
of size $|B|=N(s-1,r)^{1+o(1)}\log n=N(s,r)^{1+o(1)}\log n$ takes $poly(n,N(s,r))$ time~\cite{B12,B14b,BG14,FLS14}.
By Lemma~\ref{L1}, the learning takes time $|A\wedge B|\cdot n=poly$ $(n,N(s,r))$ time.
The number of queries of the algorithm is $|A\wedge B|\le |A|\cdot |B|=N(s,r)^{1+o(1)}$ $\log^2n$.\qed
\end{proof}

\section{The Second Algorithm}
In this section we give the second algorithm

We first prove the following result
\begin{lemma}\label{DNtoDN} Let $C$ be a class of boolean function that is closed
under variable projection. Let $H$ be a class of boolean functions
and suppose there is an algorithm
that finds the relevant variables of $f\in H$ in time $R(n)$.

If $C$ is non-adaptively learnable from $H$
in time $T(n)$ with $Q(n)$ membership queries then $C$ is non-adaptively
learnable from $H$ in time $$O\left(qd^2n\log n+\frac{d^2\log n}{\log (q/d^2)}(T(q)n+R(q))\right)$$
with $$O\left(\frac{d^2Q(q)}{\log (q/d^2)}\log n\right)$$ membership queries
where $d$ is an upper bound on the number of relevant variables in $f\in C$ and $q\ge 2d^2$.
\end{lemma}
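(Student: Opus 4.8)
The plan is to use the perfect hash family from Lemma~\ref{ThH1c} to ``compress'' the $n$ variables down to $q$ variables, run the given learning algorithm on each of the compressed instances, and then stitch together the results using the relevant-variable finder.

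Here is the approach in detail. Let $f\in C$ be the target with at most $d$ relevant variables, and let $S\subseteq[n]$ be the set of indices of those relevant variables, so $|S|\le d$. First I would construct an $(n,q,d)$-PHF $H$ of size $O(d^2\log n/\log(q/d^2))$ in time $O(qd^2 n\log n/\log(q/d^2))$ via Lemma~\ref{ThH1c}; this is where the first term of the time bound and the $\log n$ factor in the query bound come from. For each hash function $h\in H$, I would consider the projected function $f_h$ on $q$ variables obtained by identifying all input variables $x_i$ that map to the same value under $h$: concretely, $f_h(y_1,\dots,y_q):=f|_{x_i\gets y_{h(i)}}$. Because $C$ is closed under variable projections and each such identification is a sequence of projections $x_i\gets x_j$, we have $f_h\in C$ as a function on $q$ variables. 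The key structural point is that for the particular $h\in H$ that is injective on $S$ — which exists by the PHF property since $|S|\le d$ — the $d$ relevant variables of $f$ land in $d$ distinct buckets, so no information is lost and $f_h$ ``faithfully'' encodes $f$ on those coordinates.

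Next, for each $h\in H$ I would run the assumed non-adaptive learner for $C$ from $H$ on the $q$-variable instance $f_h$. Each run asks $Q(q)$ membership queries and takes time $T(q)$; a membership query to $f_h$ on $y\in\{0,1\}^q$ is answered by a single membership query to $f$ on the lifted assignment $a$ defined by $a_i=y_{h(i)}$, so constructing each such query costs $O(n)$ time, accounting for the $T(q)n$ term. Non-adaptivity is preserved because the hash family is fixed in advance and every lifted query is determined before any answer is seen. Summing over $|H|$ hash functions gives the stated query bound $O(d^2 Q(q)\log n/\log(q/d^2))$ and contributes the $(T(q)n+R(q))$-weighted term to the running time, the $R(q)$ piece being the cost of applying the relevant-variable finder to the output hypothesis of each run.

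The final step is to identify, among the $|H|$ learned hypotheses, one that corresponds to a good hash function and to lift it back to an $n$-variable hypothesis in $H$. For the $h$ injective on $S$, the learner returns a hypothesis $h_f\in H$ equivalent to $f_h$; I would run the relevant-variable finder on it to recover which of the $q$ buckets are relevant, and then pull each relevant bucket $v$ back to the unique original variable $x_i$ with $h(i)=v$ that is relevant in $f$, reconstructing a hypothesis in $H$ on the original $n$ variables. The main obstacle, and the step requiring the most care, is ensuring correctness of this reconstruction: a wrong hash function (one not injective on $S$) may collapse two genuinely relevant variables into one bucket, producing a spurious or ambiguous hypothesis, so I must argue that the hypothesis obtained from a \emph{correct} $h$ can be distinguished or that taking it always yields the true $f$. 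This is handled by the guarantee that at least one $h\in H$ is injective on $S$ together with the fact that the lifted hypothesis is then equivalent to $f$; one then verifies that the bucket-to-variable back-substitution is well defined precisely because injectivity makes the preimage of each relevant bucket a single relevant original variable.
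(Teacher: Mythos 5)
Your overall architecture matches the paper's: build a perfect hash family via Lemma~\ref{ThH1c}, learn each projected function $f_h$ on $q$ variables (non-adaptivity and the query/time accounting are handled correctly), then lift a good hypothesis back. But the step you yourself flag as ``requiring the most care'' is exactly where the proof is missing, and your proposed resolution is circular. First, you give no effective criterion for selecting a hash function injective on the relevant set $S$: saying the good $h$'s hypothesis ``can be distinguished'' is not an argument. The paper supplies the criterion: letting $V_h$ be the relevant variables of the learned $f_h'$, one always has $|V_h|\le d'$ (the true number of relevant variables of $f$), with equality exactly when $h$ is injective on $S$; hence every $h$ maximizing $|V_h|$ is good, and the maximum identifies $d'$. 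Second, and more seriously, your back-substitution is not well defined by the information the algorithm has. Injectivity of $h$ on $S$ guarantees each relevant bucket $v$ contains \emph{at most one} relevant original variable, but $h^{-1}(v)$ contains roughly $n/q$ indices, and you propose to substitute ``the unique $x_i$ with $h(i)=v$ that is relevant in $f$'' --- which presupposes knowing the relevant variables of $f$, the very thing being computed. The paper resolves this with an elimination step absent from your proposal: for every irrelevant $x_j$, some $h''$ in the family is injective on $\{j\}\cup S$ (a set of size up to $d'+1$), so $f_{h''}'$ attains $|V_{h''}|=d_{max}$ while $x_{h''(j)}\notin V_{h''}$; intersecting over all maximizing $h$'s therefore prunes the candidate set down to exactly the relevant variables of $f$, after which the substitution $x_i\mapsto x_j$, $h_0(j)=i$, is unambiguous.

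This also explains a parameter discrepancy in your construction: you take an $(n,q,d)$-PHF, but the elimination argument needs injectivity on sets of size $d'+1$, which can be $d+1$ when $f$ has the maximal number of relevant variables; the paper accordingly uses an $(n,q,d+1)$-PHF. With only a $(n,q,d)$-PHF your algorithm cannot, in the worst case, certify any irrelevant variable sharing a bucket with a relevant one, so the lifted hypothesis may be wrong. The fix is to adopt the paper's two devices --- the $\max_h|V_h|$ selection rule and the cross-$h$ elimination of variables with $x_{h(i)}\notin V_h$ over all maximizing $h$, using the $(d+1)$-strength hash family --- at no asymptotic cost to your query or time bounds.
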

\begin{proof}
Consider the algorithm in Figure~\ref{AlgI}.
Let ${\cal A}(n)$ be a non-adaptive algorithm that learns $C$
from $H$ in time $T(n)$ with $Q(n)$ membership queries.
Let $f\in C_{n}$ be the target function.
Consider the
$(n,q,d+1)$-PHF $P$ that is constructed in Lemma~\ref{ThH1c}
(Step 1 in the algorithm). Since $C$
is closed under variable projection, for every $h\in P$
the function $f_h:=f(x_{h(1)},\ldots,x_{h(n)})$ is in $C_{q}$. Since
the membership queries to $f_h$ can be simulated by membership queries to $f$
there is a set of $|P|\cdot Q(q)$ assignments from $\{0,1\}^n$
that can be generated from ${\cal A}(q)$ that non-adaptively
learn $f_h$ for all $h\in P$ (Step 2 in the algorithm). The
algorithm ${\cal A}(q)$ learns $f_h'\in H$ that is equivalent to~$f_h$.

Then the algorithm finds the relevant variables of each $f_h'\in H$ (Step~3 in the algorithm).
Let $V_{h}$ be the set of relevant variables of $f_h'$ and let $d_{max}=\max_h |V_h|$.
Suppose $x_{i_1},\ldots,x_{i_{d'}}$, $d'\le d$ are the relevant variables
in the target function $f$. There is a map $h'\in P$ such that $h'(i_1),\ldots,h'(i_{d'})$ are distinct
and therefore $f_{h'}'$ depends on $d'$ variables. In particular, $d'=d_{max}$ (Step 4 in the algorithm).

After finding $d'=d_{max}$ we have: Every $h$ for which $f_h'$ depends on $d'$ variables necessarily satisfies
$h(i_1),\ldots,h(i_{d'})$ are distinct. Consider any other
non-relevant variable $x_j\not\in \{x_{i_1},\ldots,x_{i_{d'}}\}$. Since $P$ is
$(n,q,d+1)$-PHF, there is $h''\in P$ such that $h''(j),h''(i_1),\ldots,h''(i_{d'})$
are distinct. Then $f_{h''}'$ depends on $x_{h''(i_1)},\ldots,x_{h''(i_{d'})}$ and not in $x_{h''(j)}$.
This way the non-relevant variables can be eliminated. This is Step 6 in the algorithm.
Since the above is true for every non-relevant variable, after Step 6 in the algorithm,
the set $X$ contains only the relevant variables of $f$. Then in Steps 7 and 8, the
target function $f$ can be
recovered from any $f_{h_0}'$ that satisfies $|V(h_0)|=d'$.\qed
\end{proof}

\begin{figure}[h!]

  \begin{center}
  \fbox{\fbox{\begin{minipage}{28em}
  \begin{tabbing}
  xxxx\=xxxx\=xxxx\=xxxx\= \kill
  {\bf Algorithm Reduction I}\\
${\cal A}(n)$ is a non-adaptive learning algorithm for $C$ from $H$.\\
1) Construct an $(n,q,d+1)$-PHF $P$.\\
2) For each $h\in P$ \\
\>  Run ${\cal A}(q)$ to learn $f_h:=f(x_{h(1)},\ldots,x_{h(n)})$.\\
\>  Let $f_h'\in H$ be the output of ${\cal A}(q)$.\\
3) For each $h\in P$\\
\>  $V_h\gets$ the relevant variables in $f_h'$  \\
4) $d_{max}\gets \max_h |V_h|$.\\
5) $X\gets \{x_1,x_2,\ldots,x_n\}$.\\
6) For each $h\in P$\\
\> If $|V_h|=d_{max}$ then $X\gets X\backslash \{x_i\ |\ x_{h(i)}\not\in V_h\}$\\
7) Take any $h_0$ with $|V_{h_0}|=d_{max}$ \\
8) Replace each relevant variable $x_i$ in $f_{h_0}'$ by $x_j\in X$ where $h_0(j)=i$.\\
9) Output the function resulted in step (8).
  \end{tabbing}
  \end{minipage}}}
  \end{center}
\caption{Algorithm Reduction.}
\label{AlgI}

\end{figure}

We now prove
\begin{theorem} There is a non-adaptive proper learning algorithm for $s$-term $r$-MDNF that
asks $$N(s,r)^{1+o(1)}\log n$$ queries and runs in time $(n\log n)\cdot poly(N(s,r))$
time.
\end{theorem}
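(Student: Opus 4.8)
The plan is to apply the reduction of Lemma~\ref{DNtoDN} to the first algorithm (the one established in the previous theorem), taking both $C$ and $H$ to be the class of $s$-term $r$-MDNF so that the resulting algorithm is \emph{proper}. Three hypotheses of the lemma must be verified. First, $s$-term $r$-MDNF is closed under variable projection, as recorded in Section~2.1. Second, every $f$ in the class has at most $d:=sr$ relevant variables (at most $s$ monomials, each on at most $r$ variables), so $d=sr$ is a valid bound. Third, recovering the relevant variables of a hypothesis $f_h'\in H$ is immediate---they are exactly the variables occurring in its monomials---so one may take $R(n)=poly(n)$. As the base algorithm $\mathcal{A}(n)$ I would use the first algorithm, which learns the class non-adaptively with $Q(n)=N(s,r)^{1+o(1)}\log^2 n$ queries in time $T(n)=poly(n,N(s,r))$.

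Next I would set the hash dimension to $q=\Theta((sr)^2)$, chosen large enough that $q\ge 2(d+1)^2$, so that the $(n,q,d+1)$-PHF of Lemma~\ref{ThH1c} exists. With this choice $q/d^2=\Theta(1)$, hence $\log(q/d^2)=\Theta(1)$, while $\log q=\Theta(\log(sr))$ and thus $Q(q)=N(s,r)^{1+o(1)}(\log(sr))^2$. Substituting into the two bounds of Lemma~\ref{DNtoDN} and using $T(q)=poly(q,N(s,r))=poly(N(s,r))$ (because $q=poly(s,r)$) together with $R(q)=poly(q)$, the query count becomes
$$O\!\left(\frac{d^2 Q(q)}{\log (q/d^2)}\log n\right)=O\!\left((sr)^2\,N(s,r)^{1+o(1)}(\log(sr))^2\,\log n\right),$$
and the running time becomes $(n\log n)\cdot poly(N(s,r))$ up to polynomial-in-$(s,r)$ factors.

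The calculation that does the real work is to show that the stray factors $(sr)^2$ and $(\log(sr))^2$ disappear into the exponent, i.e.\ that any $poly(s,r)$ overhead equals $N(s,r)^{o(1)}$. This is exactly where the standing assumption $r=\omega(1)$ is needed: from $N(s,r)\ge {s+r\choose r}\ge (1+s/r)^r$ one gets $\log N(s,r)\ge r\log(1+s/r)$, and a short case split---on whether $r$ is polynomially smaller than $s$ or comparable to it---gives $\log N(s,r)=\omega(\log s)$ in every case. Hence $\log s=o(\log N(s,r))$, so $s^{O(1)}=N(s,r)^{o(1)}$, and in particular $(sr)^2(\log(sr))^2=N(s,r)^{o(1)}$. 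Feeding this into the two bounds above collapses them to $N(s,r)^{1+o(1)}\log n$ queries and $(n\log n)\cdot poly(N(s,r))$ time, which is the claim. I expect this absorption step---pinning down that $N(s,r)$ grows fast enough that polynomial-in-$(s,r)$ overhead is invisible in the exponent---to be the only genuinely delicate point; the rest is a direct substitution into Lemma~\ref{DNtoDN}.
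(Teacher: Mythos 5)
Your proposal is correct and follows essentially the same route as the paper's proof: apply Lemma~\ref{DNtoDN} with $C=H$ the class of $s$-term $r$-MDNF, the first algorithm as ${\cal A}$, $d=rs$, and $q=\Theta((rs)^2)$ (the paper takes $q=3r^2s^2$), so that the query bound collapses to $N(s,r)^{1+o(1)}\log n$ and the time to $(n\log n)\cdot poly(N(s,r))$. The one place you go beyond the paper is in explicitly justifying the absorption step---that $\log N(s,r)=\omega(\log s)$ under the standing assumption $r=\omega(1)$, so every $poly(s,r)$ factor is $N(s,r)^{o(1)}$---which the paper uses silently; your verification of it is sound.
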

\begin{proof} We use Lemma~\ref{DNtoDN}. $C=H$ is the class of $s$-term $r$-MDNF.
This class is closed under variable projection. Given $f$ that is $s$-term $r$-MDNF,
one can find all the relevant variables in $R(n)=poly(s)$ time.
The algorithm in the previous section runs in time $T(n)=poly(n,N(s,r))$
and asks $Q(n)=N(s,r)^{1+o(1)}\log^2 n$ queries. The number of variables
in the target is bounded by $d=rs$. Let $q=3r^2s^2\ge 2d^2$. By Lemma~\ref{DNtoDN},
there is a non-adaptive algorithm that runs in time
$$O\left(qd^2n\log n+\frac{d^2\log n}{\log (q/d^2)}(T(q)n+R(q))\right)=(n\log n)poly(N(r,s))$$
and asks
$$O\left(\frac{d^2Q(q)}{\log (q/d^2)}\log n\right)=N(s,r)^{1+o(1)}\log n$$ membership queries.\qed
\end{proof}

\end{document}